\documentclass[a4paper,12pt]{article}

\usepackage[utf8]{inputenc}
\usepackage{algorithm}
\usepackage{algorithmicx}
\usepackage{algpseudocode}
\usepackage{url}
\usepackage{amsmath}
\usepackage{amsthm}
\usepackage{subfig}
\usepackage{cite}

\newtheorem{theorem}{Theorem}

\usepackage{pgfplots}
\pgfplotsset{compat=1.14}
\usepgfplotslibrary{external} 
\tikzexternalize[prefix=tikzcache/]

\newcommand{\newalg}[1]{{\color{red}#1}}
\newcommand{\onell}{(1+(\lambda,\lambda))}
\newcommand{\lambdabound}{\overline{\lambda}}

\begin{document}

\title{The 1/5-th Rule with Rollbacks: On~Self-Adjustment of the Population Size
in~the~$(1+(\lambda,\lambda))$~GA\footnote{An extended two-page abstract of this work will appear in proceedings of the Genetic and Evolutionary Computation Conference, GECCO'19.}}

\author{Anton Bassin, Maxim Buzdalov}

\maketitle

\begin{abstract}
Self-adjustment of parameters can significantly improve the performance of evolutionary algorithms.
A notable example is the $(1+(\lambda,\lambda))$ genetic algorithm, where the adaptation of the population size helps to achieve the linear runtime on the \textsc{OneMax} problem.
However, on problems which interfere with the assumptions behind the self-adjustment procedure, its usage can lead to performance degradation compared to static parameter choices.
In particular, the one fifth rule, which guides the adaptation in the example above, is able to raise the population size too fast on problems which are too far away from the perfect
fitness-distance correlation.

We propose a modification of the one fifth rule in order to have less negative impact on the performance in scenarios when the original rule reduces the performance.
Our modification, while still having a good performance on \textsc{OneMax}, both theoretically and in practice, also shows better results on
linear functions with random weights and on random satisfiable MAX-SAT instances.
\end{abstract}

\section{Introduction}

The $(1+(\lambda,\lambda))$ genetic algorithm, proposed in~\cite{learning-from-black-box-thcs}, is a bright example of a successful
application of self-adjustment of parameters. Not only it is the first example of an evolutionary algorithm with linear runtime on
a simple benchmarking function \textsc{OneMax}, as all previously known algorithms were $\Omega(n \log n)$, it is also the first
example that self-adjustment of a parameter can be asymptotically better than any fixed parameter choice, possibly depending on the
problem's characteristics.

However, despite first successes on optimizing problems other than \textsc{OneMax}~--- notably, good enough running times on linear functions
with random weights taken from $[1;2]$ or royal road functions reported in~\cite{learning-from-black-box-thcs}, or a suprisingly good
performance in practice when optimizing MAX-SAT problems~\cite{goldman-punch-ppp}, which was subsequently supported theoretically~\cite{buzdalovD-gecco17-3cnf}~---
this algorithm is quite slow to conquer other territories. A possible explanation of this fact is that the method of parameter adjustment
can behave badly on problems not very similar to \textsc{OneMax}, that is, those with low fitness-distance correlation.
This is partially supported by the analysis in~\cite{buzdalovD-gecco17-3cnf}, where it was necessary to bound the parameter from the above
to ensure good performance.

This paper aims at investigating this problem in more detail. After describing the algorithm and the analysed problems in Section~\ref{sect:pre},
we perform a little landscape analysis for a few problems in Section~\ref{sect:eui} and discuss the possible reasons for why the
$(1+(\lambda,\lambda))$~GA is not so good and how it relates to its self-adjustment. In Section~\ref{sect:mod}
we propose a slight modification to the one fifth rule, which is in the core of the self-adjustment strategy, in order to slow down the speed of
(dis)adaptation. We then conduct experiments on our set of benchmark problems in Section~\ref{sect:practice}
and confirm that the negative consequences of the misguidance imposed by the one fifth rule are somewhat damped. 
While this seems to retain the existence of the problems, it does improve the performance by at least a constant factor.
To prove that we did not break the goodness of the algorithm, we prove in Section~\ref{sect:theory} that the runtime on \textsc{OneMax} is still linear.
In Section~\ref{sect:best}, we investigate what would be the performance of the $(1+(\lambda,\lambda))$~GA if it always chose the best values for $\lambda$,
which we have experimentally observed in Section~\ref{sect:eui}, and the result is that there is still a room for improvement even in the hardest cases.
Finally, Section~\ref{sect:conclusion} concludes the paper.

\section{Preliminaries}\label{sect:pre}

We denote as $[1..n]$ the set of integer numbers $\{1, 2, \ldots, n\}$. The rest of the section
is dedicated to the definitions of the analyzed problems and evolutionary algorithms.

\subsection{Analyzed Problems}

Among other problems, we will consider \textsc{OneMax}, which is a classic test problem used in evolutionary computation.
The problem instances are functions $\textsc{OneMax}_{n,z}$, $z \in \{0,1\}^n$, defined on
bit strings of size $n$ as
\begin{equation*}
    \textsc{OneMax}_{n,z}: \{0,1\}^n \to \mathbf{R}; x \mapsto |\{i \in [1..n] \mid x_i = z_i\}|,
\end{equation*}
that is, $\textsc{OneMax}_{n,z}$ counts the number of bit positions in which $x$ and $z$ agree.
For a fixed $n$, all functions $\textsc{OneMax}_{n,z}$ constitute the \textsc{OneMax} problem of size $n$.

\textsc{OneMax} can be represented as a pseudo-Boolean polynomial function with the maximum degree of a monomial equal to 1,
so it is a \emph{linear} function. Along with \textsc{OneMax} we will consider a wider family of linear functions, defined as follows:
\begin{equation*}
    \textsc{Linear}_{n,z,w}: \{0,1\}^n \to \mathbf{R}; x \mapsto \sum_{i \in [1..n]} w_i \cdot [x_i = z_i],
\end{equation*}
where $w$ is the vector of positive weights associated with bit positions. We consider the cases when
the elements of $w$ are integer and are chosen uniformly at random from the set $[1..W]$, and call the corresponding problem
$\textsc{LinInt}_W$. Note that $\textsc{OneMax} = \textsc{LinInt}_1$.

We will also consider the special case of the MAX-3SAT problem~\cite{garey-johnson}, which aims at finding an assignment of
$n$ Boolean variables that maximizes the number of satisfied clauses in a Boolean formula written in the conjunctive normal form
with at most three variables (or their inversions) in each of $m$ clauses. While in general this problem is very hard,
some of the formulations are easier~\cite{sat-hard-and-easy}. We will use randomly generated formulas following the so-called \emph{planted solution} model.
In this model, there is a target assignment $x^*$ and the formula is chosen uniformly at random among all formulas on $n$ variables and with $m$ clauses
which are satisfied by $x^*$. This problem was studied in the context of theory of evolutionary computation~\cite{SuttonN14,doerr-neumann-sutton-oneplusone-cnf,buzdalovD-gecco17-3cnf}
and found to be similar to \textsc{OneMax} when the formula is \emph{dense}, that is $m \ge n^2$, and less similar but still tractable on average
when $m \ge n \log n$.

\subsection{The $\onell$ GA}

The $\onell$ Genetic Algorithm, or the $\onell$~GA for short, was proposed in~\cite{learning-from-black-box-thcs}.
Its main working principles are (i) to use mutation with a higher-than-usual mutation rate to speed up exploration
and (ii) crossover with the parent to diminish the destructive effects of this mutation.

The original paper~\cite{learning-from-black-box-thcs} mostly analyzed the version with the fixed parameter
$\lambda$, and it was proved that choosing a certain value for $\lambda$ as a function of problem size $n$
brings an $o(n \log n)$ runtime on \textsc{OneMax}, which is asymptotically faster than any other evolutionary algorithms at that time.
This was subsequently refined in~\cite{doerr-doerr-lambda-lambda-fixed-tight,doerr-doerr-lambda-lambda-parameters-journal}, where these bounds were further refined.

In the same time, \cite{learning-from-black-box-thcs} proved that setting $\lambda$ dynamically as a function not only of the problem size $n$,
but also of the current fitness, yields an $O(n)$ runtime on \textsc{OneMax}
and showed experimentally that a simple one fifth rule is able to keep $\lambda$ in the vicinity of the right value.
The fact that the runtime is also linear in this case was proven in~\cite{doerr-doerr-lambda-lambda-self-adjustment,doerr-doerr-lambda-lambda-parameters-journal}.
The linear runtime was also proven for MAX-SAT with large enough density of clauses~\cite{buzdalovD-gecco17-3cnf},
although it was found that, in order to be successful, the parameter $\lambda$ shall be kept hard under a sublinear bound
(the value of $\lambdabound = 2 \log n$ was used in that paper).

\begin{algorithm}[t]
\caption{$\onell$~GA with self-adjusting $\lambda \le \lambdabound$}\label{algo:ll-adjusting}
\begin{algorithmic}[1]
\State{$F \gets \text{constant} \in (1; 2)$} \Comment{Update strength}
\State{$U \gets 5$} \Comment{The 5 from the ``1/5-th rule''}
\State{$x \gets \Call{UniformRandom}{\{0, 1\}^n}$}
\For{$t \gets 1, 2, 3, \ldots$}
    \State{$p \gets \lambda / n$, $c \gets 1 / \lambda$, $\lambda' \gets [\lambda]$, $\ell \sim \mathcal{B}(n, p)$}
    \For{$i \in [1..\lambda']$} \Comment{Phase 1: Mutation}
        \State{$x^{(i)} \gets \Call{Mutate}{x, \ell}$}
    \EndFor
    \State{$x' \gets \Call{UniformRandom}{\{x^{(j)} \mid f(x^{(j)}) = \max\{f(x^{(i)})\}\}}$}
    \For{$i \in [1..\lambda']$} \Comment{Phase 2: Crossover}
        \State{$y^{(i)} \gets \Call{Crossover}{x, x', c}$}
    \EndFor
    \State{$y \gets \Call{UniformRandom}{\{y^{(j)} \mid f(y^{(j)}) = \max\{f(y^{(i)})\}\}}$}
    \If{$f(y) > f(x)$} \Comment{Selection and Adaptation}
        \State{$x \gets y$, $\lambda \gets \max\{\lambda / F, 1\}$}
    \ElsIf{$f(y) = f(x)$}
        \State{$x \gets y$, $\lambda \gets \min\{\lambda F^{1/(U - 1)}, \lambdabound\}$}
    \Else
        \State{$\lambda \gets \min\{\lambda F^{1/(U - 1)}, \lambdabound\}$}
    \EndIf
\EndFor
\end{algorithmic}
\end{algorithm}

The adaptive version of the $\onell$~GA, already with the bound on $\lambda$, is presented on Algorithm~\ref{algo:ll-adjusting}.
This algorithm uses the following operators to work on the bit-string individuals:
\begin{itemize}
    \item An $\ell$-bit mutation: The unary mutation operator $\Call{Mutate}{x, \ell}$ creates from $x \in \{0,1\}^n$ 
          a new bit string $y$ by flipping exactly $\ell$ bits chosen randomly without replacement.
    \item A ``biased'' uniform crossover: The binary operator $\Call{Crossover}{x, x', c}$ with the \emph{crossover bias} 
          $c \in [0,1]$ constructs a new bit string $y$ from two given bit strings $x$ and $x'$ by choosing for each $i \in [1..n]$ 
          the second argument's value ($y_i = x'_i$) with probability $c$ and setting $y_i = x_i$ otherwise.
\end{itemize}

We fix the mutation rate to be $p = \lambda / n$ and the crossover bias to be $c = 1 / \lambda$ as
and justified in~\cite{learning-from-black-box-thcs,Doerr16}.
After randomly initializing the one-element parent population $\{x\}$, in each iteration the following steps are performed:
\begin{itemize}
    \item In the \emph{mutation phase}, $\lambda$ offspring are sampled from the parent $x$ by applying the mutation for $\lambda$ times 
          where the step size $\ell$ is the same for all the offspring and is chosen at random from the binomial distribution $\mathcal{B}(n, p)$.
          Following the recently popularized ``more practice-aware'' paradigm~\cite{practice-aware},
          which has already given some remarkable results~\cite{pinto-doerr-crossover-ppsn18}, we resample $\ell$ if it was found to be zero.
    \item In an \emph{intermediate selection} step, the mutation offspring with maximal fitness, 
          called the \emph{mutation winner} and denoted by $x'$, is determined.
    \item In the \emph{crossover phase}, $\lambda$ offspring are created from $x$ and $x'$ using the crossover operator.
          To be ``practice-aware'' when performing crossovers, if a particular crossover requires borrowing none bits from $x'$,
          we repeat the sampling process again, and if it requires all bits to be sampled from $x'$,
          we do not perform fitness evaluation of the crossover's result.
    \item \emph{Elitist selection}. The best of the crossover offspring (breaking ties randomly and ignoring individuals identical to $x$) 
          replaces $x$ if its fitness is at least as large as the one of $x$.
\end{itemize}

The self-adjustment procedure of the $\onell$~GA is as follows. If the fitness is increased after an iteration,
then the value of $\lambda$ is reduced by a constant factor $F > 1$. 
If an iteration did not produce a fitness improvement, then $\lambda$ is increased by a factor of $F^{1/4}$. 
As a result, after a series of iterations with an average success rate of $1/5$, the algorithm ends up with the initial value of $\lambda$.
We bound $\lambda$ from below by one and from the above by some threshold $\lambdabound$, which typically depends on $n$ and is at most $n$.

\section{On Evaluations Until Improvement}\label{sect:eui}

\newcommand{\funnyfig}[1]{\scalebox{0.78}{
\begin{tikzpicture}
\begin{axis}[view={20}{10}, xlabel={Distance to the optimum}, ylabel={$\lambda$}, zlabel={Evaluations until improvement}, xmode=log, ymode=log, zmode=log]
\addplot3[surf] coordinates {
    (1,1,1)   (500,1,1)

    (1,50,1)  (500,50,1)
};
\addplot3+[only marks, mark size=1pt] table {plots/pgfm-#1.txt};
\addplot3[surf] file {plots/pgf-#1.txt};
\end{axis}
\end{tikzpicture}
}}

\begin{figure}[!]
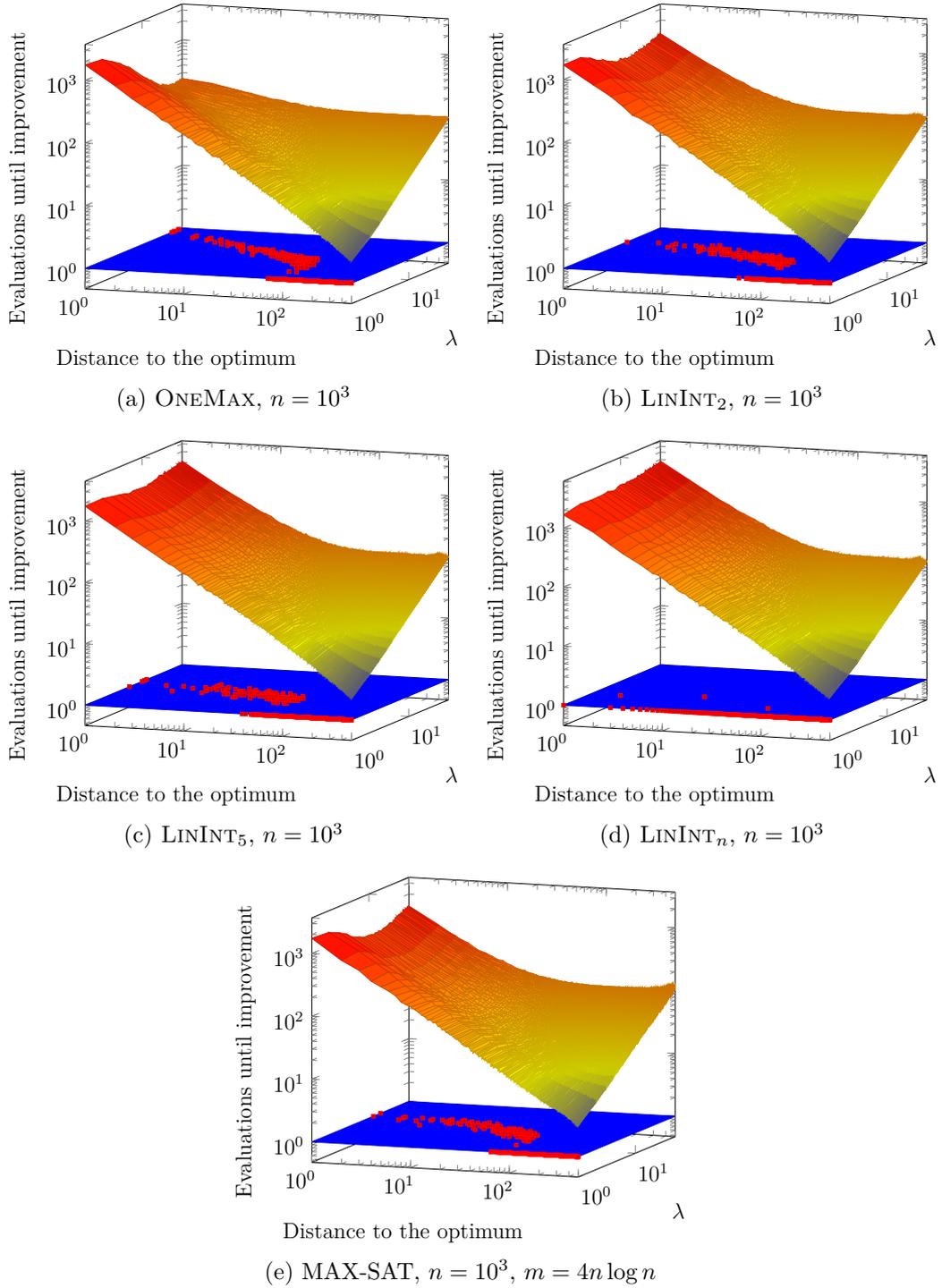

\centering
\subfloat[][\textsc{OneMax}, $n=10^3$]{\funnyfig{1000-1}\label{pp-1}}
\subfloat[][$\textsc{LinInt}_2$, $n=10^3$]{\funnyfig{1000-2}\label{pp-2}}\par
\subfloat[][$\textsc{LinInt}_5$, $n=10^3$]{\funnyfig{1000-5}\label{pp-5}}
\subfloat[][$\textsc{LinInt}_n$, $n=10^3$]{\funnyfig{1000-n}\label{pp-n}}\par
\subfloat[][MAX-SAT, $n=10^3$, $m=4n \log n$]{\funnyfig{1000-sat}\label{pp-sat}}
\caption{Performance plots for different problems. Red cubes are choices for $\lambda(d)$ within 2\% of the best.}
\end{figure}

In Fig.~\ref{pp-1}--\ref{pp-sat} we measured experimentally the impact of choosing particular values for $\lambda$,
depending on the Hamming distance to the optimum, for problems $\textsc{LinInt}_2$, $\textsc{LinInt}_5$, $\textsc{LinInt}_n$
and MAX-SAT with logarithmic clause density. We also repeat it for \textsc{OneMax}. For a single problem size $n = 10^3$,
and for all $\lambda_0 \in [1..50]$ we ran the $(1+(\lambda,\lambda))$ GA with the fixed $\lambda = \lambda_0$,
for $10^3$ times for each $\lambda_0$ and for each problem. During every run, we recorded, for all Hamming distances to the optimum
$1 \le d \le 500$, the total number of evaluations $E(d, \lambda)$ spent in this location, and the total number of events $I(d, \lambda)$ that the algorithm
leaves this location towards a smaller Hamming distance.

The top surfaces of the figures display an approximation of the \emph{expected number of evaluations until improvement}, computed as $E(d, \lambda) / I(d, \lambda)$.
To visualize the near-optimal choices of $\lambda$, we collected for every $d$ the values of $\lambda$ which lie within 2\% of the experimentally determined optimal choice
and displayed them as red cubes on the bottom surface of each plot. The blue bottom planes are drawn in order to deliver a better impression on where the cubes are actually located
regarding the coordinate axes.

The resulting look of Fig.~\ref{pp-1} supports the already known facts about the behaviour of the $(1+(\lambda,\lambda))$~GA on \textsc{OneMax}. Apart from the discretization
issues at larger distances from the optimum, good values for $\lambda$ are laid out along the line (in the logarithmic axes) originated at $d = 500, \lambda = 1$
and targeting towards $d = 1, \lambda \approx 30\ldots40$. By this it generally confirms once again that $\lambda \approx \sqrt{n/(n - f(x))}$ is a near-optimal choice for \textsc{OneMax}.

The results for $\textsc{LinInt}_2$ (Fig.~\ref{pp-2}) and $\textsc{LinInt}_5$ (Fig.~\ref{pp-5}), chosen for a reason that they ideally shall not deviate from \textsc{OneMax} by too much,
show that the picture is a little bit different. While the behaviour at distance of at most $d \approx 50$ seems generally similar to the one on \textsc{OneMax},
it changes on getting closer to the optimum. It appears visually that the trend is probably still linear in logarithmic axes, but the quotient is different
and at $d \approx 1$ the best choices of $\lambda$ for $\textsc{LinInt}_2$ are slightly above 10, and for $\textsc{LinInt}_5$ they are slightly below 10.
The top surfaces are even more different: the best observed expected progress is around $10^3$ for $\textsc{LinInt}_2$ and comes closer to $2\cdot10^3$ for $\textsc{LinInt}_5$,
as opposed to $\approx200$ for \textsc{OneMax}. This might mean that the progress is too slow for the one fifth rule to keep $\lambda$ in a good shape.

The extreme example of $\textsc{LinInt}_n$ (Fig.~\ref{pp-n}) shows that when this correlation finally breaks, the sole structure of the $(1+(\lambda,\lambda))$~GA is not sufficient in its current
form to optimize this problem well. The optimal values of $\lambda$ are concentrated around $\lambda = 1$, and the entire landscape is ``too hill-climber-friendly'' from the 
perspective of the $(1+(\lambda,\lambda))$~GA.

An interesting behaviour is demonstrated on the MAX-SAT problem with logarithmic density of clauses (Fig.~\ref{pp-sat}). The landscape seems to be somewhere between
the ones for \textsc{OneMax} and for $\textsc{LinInt}_2$, while the curve of optimal $\lambda$ appears to be bent, as if for large distances to the optimum the problem is just like 
\textsc{OneMax}, but gets disproportionally more complicated towards the optimum. The positive effect from bounding $\lambda \le 2 \log n$, as in~\cite{buzdalovD-gecco17-3cnf},
maps well to this picture.

\section{Proposed Modification}\label{sect:mod}

The proposed modification to the self-adjustment rule of the $(1+(\lambda,\lambda))$~GA is outlined on Algorithm~\ref{algo:ll-mod}.

The main idea is to prohibit the immediate growth of $\lambda$ on long unsuccessful runs, while allowing raising it arbitrarily high in more steps if really needed.
On a successful iteration, the modified value of $\lambda$ is additionally saved in a variable $\lambda_0$.
Any sequence of unsuccessful iterations is now split in \emph{spans} of linearly increasing lengths, starting with 10 to be more like the original $(1+(\lambda,\lambda))$~GA
in the case the adaptation works well. The length is incremented by 1 once the entire span is used up, and the next span starts again with $\lambda_0$. By this strategy, we limit
the speed with which the maximum value of $\lambda$ grows, and in the same time we retain the chances to perform iteration with rather small $\lambda$, which may be of use
when small $\lambda$ are better.

This modification roughly squares the number of iterations, needed by the $(1+(\lambda,\lambda))$~GA to reach a certain distant value of $\lambda$.
As a result, when the maximal $\lambda$ overshoots the optimal value for the current distance, the algorithm still has some iterations to spend around the optimal values
even if there is no fitness improvement yet, unlike the original $(1+(\lambda,\lambda))$~GA.

We still reserve the right to bound $\lambda$ from above by a given $\lambdabound$. In our experiments, we consider both cases, $\lambdabound = n$ and $\lambdabound = 2 \log n$,
to assess the impact of the proposed strategy even in the constrained case, while keeping track of what happens when the strategy works on its own.

\begin{algorithm}[!t]
	\caption{$\onell$~GA, \newalg{modified} self-adjustment of $\lambda \le \lambdabound$}\label{algo:ll-mod}
	\begin{algorithmic}[1]
		\State{$F \gets \text{constant} \in (1; 2)$} \Comment{Update strength}
		\State{$U \gets 5$} \Comment{The 5 from the ``1/5-th rule''}
		\newalg{\State{$B \gets 0$} \Comment{Bad iterations in a row}}
		\newalg{\State{$\Delta \gets 10$} \Comment{Span for growing $\lambda$}}
		\newalg{\State{$\lambda_{+} \gets 1$} \Comment{The base for $\lambda$}}
		\State{$x \gets \Call{UniformRandom}{\{0, 1\}^n}$}
		\For{$t \gets 1, 2, 3, \ldots$}
		\State{$p \gets \lambda / n$, $c \gets 1 / \lambda$, $\lambda' \gets [\lambda]$, $\ell \sim \mathcal{B}(n, p)$}
		\For{$i \in [1..\lambda']$} \Comment{Phase 1: Mutation}
		\State{$x^{(i)} \gets \Call{Mutate}{x, \ell}$}
		\EndFor
		\State{$x' \gets \Call{UniformRandom}{\{x^{(j)} \mid f(x^{(j)}) = \max\{f(x^{(i)})\}\}}$}
		\For{$i \in [1..\lambda']$} \Comment{Phase 2: Crossover}
		\State{$y^{(i)} \gets \Call{Crossover}{x, x', c}$}
		\EndFor
		\State{$y \gets \Call{UniformRandom}{\{y^{(j)} \mid f(y^{(j)}) = \max\{f(y^{(i)})\}\}}$}
		\If{$f(y) > f(x)$} \Comment{Selection and Adaptation}
		\State{$x \gets y$, $\lambda \gets \max\{\lambda / F, 1\}$, \newalg{$\lambda_0 \gets \lambda$, $B \gets 0$, $\Delta \gets 10$}}
		\Else
		\If{$f(y) = f(x)$}
		\State{$x \gets y$}
		\EndIf
		\newalg{\State{$B \gets B + 1$}
			\If{$B = \Delta$}
			\State{$B \gets 0$, $\Delta \gets \Delta + 1$}
			\EndIf}
		\State{$\lambda \gets \min\{\newalg{\lambda_0} F^{\newalg{B}/(U - 1)}, \lambdabound\}$}
		\EndIf
		\EndFor
	\end{algorithmic}
\end{algorithm}

\section{Experimental Evaluation}\label{sect:practice}

We have evaluated the original $(1+(\lambda,\lambda))$~GA in two variations ($\lambdabound = n$ and $\lambdabound = 2 \log n$),
the same algorithm with the modified adaptation (again in these two variations), as well as the $(1+1)$~EA with the standard bit mutation
(using resampling when zero bits appear to be flipped) and the randomized local search.
The same five problems are used in experiments (\textsc{OneMax}, $\textsc{LinInt}_2$, $\textsc{LinInt}_5$, $\textsc{LinInt}_n$, MAX-SAT).
We ran each algorithm for 100 times on each problem with $n \in \{100, 200, 400, 800, 1600, 3200, 6400, 12800 \}$.

The results are presented in Fig.~\ref{rt-1}--\ref{rt-sat}. On \textsc{OneMax}, all variants of $(1+(\lambda,\lambda))$~GA behave well as expected (Fig.~\ref{rt-1}).
In particular, the logarithmic versions are slightly inferior to the unlimited versions, which is also expected as the former use suboptimal $\lambda$ values at the
final iterations. The proposed algorithm is also slightly inferior to the original one, when the matching versions are compared, but the difference is small. It appears that
it has a constant-multiple overhead of some 10\%, however, the dynamic is still linear.

\newcommand{\plotscaleout}{0.6}
\newcommand{\plotscalein}{0.8\textwidth}

\begin{figure}[!t]
\centering
\subfloat[][\textsc{OneMax}]{
\scalebox{\plotscaleout}{\begin{tikzpicture}
\begin{axis}[width=0.8\textwidth, enlargelimits=false, xmode=log, log basis x = 2, xlabel={Problem size}, ylabel={Evaluations / $x$}, legend pos=north west]
  \addplot plot[error bars/.cd, y dir=both, y explicit] coordinates{(100,5.5637)+-(0,1.0288)(200,5.92125)+-(0,0.83265)(400,6.23755)+-(0,0.577925)(800,6.3702875)+-(0,0.42737499999999995)(1600,6.46545625)+-(0,0.30089375)(3200,6.553309375)+-(0,0.274425)(6400,6.599128125)+-(0,0.190840625)(12800,6.62523671875)+-(0,0.15856015625)};
  \addlegendentry{\small$(1+(\lambda,\lambda)), n$};
  \addplot plot[error bars/.cd, y dir=both, y explicit] coordinates{(100,5.7714)+-(0,0.9325)(200,6.0132)+-(0,0.8466)(400,6.238474999999999)+-(0,0.7141500000000001)(800,6.5728375)+-(0,0.5592625)(1600,6.734618749999999)+-(0,0.46436875)(3200,6.8443)+-(0,0.38910312500000005)(6400,6.9554640625)+-(0,0.3770046875)(12800,7.08593203125)+-(0,0.31499921875)};
  \addlegendentry{\small$(1+(\lambda,\lambda)), \log n$};
  \addplot plot[error bars/.cd, y dir=both, y explicit] coordinates{(100,4.4351)+-(0,1.0248)(200,5.1535)+-(0,1.4615)(400,5.7822249999999995)+-(0,1.20115)(800,6.4304625)+-(0,1.205125)(1600,7.07109375)+-(0,1.10829375)(3200,7.91875)+-(0,1.298853125)(6400,8.607821874999999)+-(0,1.3050296875)(12800,9.240272656250001)+-(0,1.3602125)};
  \addlegendentry{\small RLS};
  \addplot plot[error bars/.cd, y dir=both, y explicit] coordinates{(100,7.0224)+-(0,2.5888999999999998)(200,8.3438)+-(0,2.1153)(400,9.03625)+-(0,1.962475)(800,9.965925)+-(0,2.3120875)(1600,11.536175)+-(0,2.0443625)(3200,12.440859375)+-(0,2.13119375)(6400,13.813815625)+-(0,2.0358984375)(12800,14.8797796875)+-(0,1.935478125)};
  \addlegendentry{\small(1+1) EA};
  \addplot plot[error bars/.cd, y dir=both, y explicit] coordinates{(100,6.3083)+-(0,1.1414)(200,6.489349999999999)+-(0,1.01475)(400,6.54845)+-(0,0.7827500000000001)(800,6.8065125)+-(0,0.6276125)(1600,6.88228125)+-(0,0.4787625)(3200,6.9262875)+-(0,0.38632500000000003)(6400,6.955978125000001)+-(0,0.30210000000000004)(12800,6.971492187500001)+-(0,0.18661562499999998)};
  \addlegendentry{\small$(1+(\lambda,\lambda)), n*$};
  \addplot plot[error bars/.cd, y dir=both, y explicit] coordinates{(100,6.3162)+-(0,1.2541)(200,6.666900000000001)+-(0,1.0293999999999999)(400,6.80655)+-(0,0.77705)(800,6.984475)+-(0,0.6951)(1600,7.086225)+-(0,0.64686875)(3200,7.14023125)+-(0,0.448865625)(6400,7.281140625000001)+-(0,0.4125)(12800,7.3771070312500004)+-(0,0.35529921875)};
  \addlegendentry{\small$(1+(\lambda,\lambda)), \log n*$};
\end{axis}
\end{tikzpicture}}\label{rt-1}}
\subfloat[][$\textsc{LinInt}_2$]{
\scalebox{\plotscaleout}{\begin{tikzpicture}
\begin{axis}[width=\plotscalein, enlargelimits=false, xmode=log, log basis x = 2, xlabel=Problem size, ylabel=Evaluations / $x$, legend pos=north west]
  \addplot plot[error bars/.cd, y dir=both, y explicit] coordinates{(100,6.0763)+-(0,1.204)(200,6.90485)+-(0,1.2181)(400,8.109525)+-(0,1.9867750000000002)(800,8.565137499999999)+-(0,1.8231625)(1600,10.4639375)+-(0,3.543675)(3200,12.708878125000002)+-(0,3.960221875)(6400,15.357650000000001)+-(0,4.4171625)(12800,17.57313984375)+-(0,3.879203125)};
  \addlegendentry{\small$(1+(\lambda,\lambda)), n$};
  \addplot plot[error bars/.cd, y dir=both, y explicit] coordinates{(100,6.1152999999999995)+-(0,1.3293000000000001)(200,6.707050000000001)+-(0,0.9616)(400,7.450325)+-(0,1.05745)(800,7.843175)+-(0,1.09345)(1600,8.454475)+-(0,1.12859375)(3200,9.145178125)+-(0,1.1332187500000002)(6400,9.748978125)+-(0,1.0820640625)(12800,10.353447656250001)+-(0,1.024196875)};
  \addlegendentry{\small$(1+(\lambda,\lambda)), \log n$};
  \addplot plot[error bars/.cd, y dir=both, y explicit] coordinates{(100,4.3529)+-(0,1.0293999999999999)(200,5.0988)+-(0,1.3125499999999999)(400,6.0539250000000004)+-(0,1.4759)(800,6.4114875)+-(0,1.2976750000000001)(1600,7.1292)+-(0,1.10215)(3200,7.928203125)+-(0,1.320284375)(6400,8.625865625)+-(0,1.30164375)(12800,9.39326953125)+-(0,1.26112265625)};
  \addlegendentry{\small RLS};
  \addplot plot[error bars/.cd, y dir=both, y explicit] coordinates{(100,6.9714)+-(0,2.2975)(200,8.18665)+-(0,2.43105)(400,9.285475)+-(0,2.442975)(800,10.2239625)+-(0,1.9377250000000001)(1600,11.52631875)+-(0,2.462325)(3200,12.589859375000001)+-(0,2.1131312500000003)(6400,13.9055171875)+-(0,1.990340625)(12800,15.213092968749999)+-(0,2.2511085937499997)};
  \addlegendentry{\small(1+1) EA};
  \addplot plot[error bars/.cd, y dir=both, y explicit] coordinates{(100,6.5866999999999996)+-(0,1.3476)(200,6.972300000000001)+-(0,1.21535)(400,7.612825)+-(0,1.134575)(800,8.3706625)+-(0,1.2951374999999998)(1600,9.024099999999999)+-(0,1.54860625)(3200,9.94128125)+-(0,1.7436656249999998)(6400,11.163350000000001)+-(0,2.029009375)(12800,13.19833984375)+-(0,2.9658125)};
  \addlegendentry{\small$(1+(\lambda,\lambda)), n*$};
  \addplot plot[error bars/.cd, y dir=both, y explicit] coordinates{(100,6.5587)+-(0,1.5301)(200,6.8761)+-(0,1.2759)(400,7.612424999999999)+-(0,1.097475)(800,8.1857875)+-(0,1.2109874999999999)(1600,8.61686875)+-(0,1.03825625)(3200,9.084521875)+-(0,0.9670531250000001)(6400,9.764753125)+-(0,0.9554515625000001)(12800,10.538949218749998)+-(0,1.31181484375)};
  \addlegendentry{\small$(1+(\lambda,\lambda)), \log n*$};
\end{axis}
\end{tikzpicture}}\label{rt-2}}
\par
\subfloat[][$\textsc{LinInt}_5$]{
\scalebox{\plotscaleout}{\begin{tikzpicture}
\begin{axis}[width=\plotscalein, enlargelimits=false, xmode=log, log basis x = 2, xlabel=Problem size, ylabel=Evaluations / $x$, legend pos=north west]
  \addplot plot[error bars/.cd, y dir=both, y explicit] coordinates{(100,7.2784)+-(0,2.5418000000000003)(200,8.31785)+-(0,2.2033)(400,10.12525)+-(0,2.9657)(800,12.0535125)+-(0,3.5171125)(1600,15.151943750000001)+-(0,4.61335)(3200,17.58734375)+-(0,5.552490625000001)(6400,20.181012499999998)+-(0,4.2680609375)(12800,22.5550859375)+-(0,4.899)};
  \addlegendentry{\small$(1+(\lambda,\lambda)), n$};
  \addplot plot[error bars/.cd, y dir=both, y explicit] coordinates{(100,7.2938)+-(0,2.1328)(200,7.95025)+-(0,2.03555)(400,8.9665)+-(0,1.6615)(800,9.837150000000001)+-(0,1.6979750000000002)(1600,10.964281249999999)+-(0,1.9637)(3200,12.147490625)+-(0,1.7185875)(6400,13.773721875000001)+-(0,2.4058281249999998)(12800,15.184000781249999)+-(0,2.18509453125)};
  \addlegendentry{\small$(1+(\lambda,\lambda)), \log n$};
  \addplot plot[error bars/.cd, y dir=both, y explicit] coordinates{(100,4.5963)+-(0,1.4453)(200,5.4157)+-(0,1.3307499999999999)(400,5.781549999999999)+-(0,1.135525)(800,6.6718875)+-(0,1.5083375)(1600,7.38183125)+-(0,1.38455625)(3200,7.8161000000000005)+-(0,1.241959375)(6400,8.533551562500001)+-(0,1.1649890625000001)(12800,9.119334375)+-(0,1.192096875)};
  \addlegendentry{\small RLS};
  \addplot plot[error bars/.cd, y dir=both, y explicit] coordinates{(100,7.0297)+-(0,2.1848)(200,7.98955)+-(0,1.9853999999999998)(400,9.2513)+-(0,1.9627000000000001)(800,10.472987499999999)+-(0,2.040075)(1600,11.675418749999999)+-(0,2.1023187500000002)(3200,12.78645625)+-(0,2.462084375)(6400,14.03050625)+-(0,2.1053484375)(12800,15.16709453125)+-(0,2.20628359375)};
  \addlegendentry{\small(1+1) EA};
  \addplot plot[error bars/.cd, y dir=both, y explicit] coordinates{(100,7.277200000000001)+-(0,2.2181)(200,8.52355)+-(0,1.89125)(400,9.032175)+-(0,1.825275)(800,10.13495)+-(0,2.04405)(1600,12.043412499999999)+-(0,3.05435)(3200,14.33079375)+-(0,3.34886875)(6400,16.8719703125)+-(0,4.3126078125)(12800,19.14669453125)+-(0,4.2773101562499996)};
  \addlegendentry{\small$(1+(\lambda,\lambda)), n*$};
  \addplot plot[error bars/.cd, y dir=both, y explicit] coordinates{(100,7.271599999999999)+-(0,2.1923)(200,7.963550000000001)+-(0,1.50575)(400,9.412975)+-(0,1.74795)(800,10.145275)+-(0,1.7477)(1600,10.96315)+-(0,1.72420625)(3200,12.034090625)+-(0,1.666096875)(6400,13.6460390625)+-(0,1.9782546875)(12800,14.9890453125)+-(0,2.187259375)};
  \addlegendentry{\small$(1+(\lambda,\lambda)), \log n*$};
\end{axis}
\end{tikzpicture}}\label{rt-5}}
\subfloat[][$\textsc{LinInt}_n$]{
\scalebox{\plotscaleout}{\begin{tikzpicture}
\begin{axis}[width=\plotscalein, enlargelimits=false, xmode=log, log basis x = 2, xlabel=Problem size, ylabel=Evaluations / $x$, legend pos=north west]
  \addplot plot[error bars/.cd, y dir=both, y explicit] coordinates{(100,8.4499)+-(0,2.8956)(200,9.4912)+-(0,2.6322)(400,12.502650000000001)+-(0,4.111125)(800,14.0776875)+-(0,3.744475)(1600,17.2562)+-(0,4.78709375)(3200,20.64246875)+-(0,4.75829375)(6400,23.3719296875)+-(0,4.3533671875)(12800,26.6823578125)+-(0,6.0413515625)};
  \addlegendentry{\small$(1+(\lambda,\lambda)), n$};
  \addplot plot[error bars/.cd, y dir=both, y explicit] coordinates{(100,8.232000000000001)+-(0,2.5228)(200,9.624550000000001)+-(0,2.6355)(400,11.446575000000001)+-(0,3.0713)(800,12.428525)+-(0,2.9115249999999997)(1600,13.39275625)+-(0,2.53851875)(3200,15.738050000000001)+-(0,2.5584375)(6400,16.6544953125)+-(0,2.5982421875)(12800,18.46009765625)+-(0,2.969246875)};
  \addlegendentry{\small$(1+(\lambda,\lambda)), \log n$};
  \addplot plot[error bars/.cd, y dir=both, y explicit] coordinates{(100,4.3662)+-(0,1.1856)(200,5.07945)+-(0,1.19615)(400,5.809375)+-(0,1.17075)(800,6.6073375)+-(0,1.2174625000000001)(1600,7.26019375)+-(0,1.40701875)(3200,7.7561968750000005)+-(0,1.19429375)(6400,8.757476562499999)+-(0,1.3852484375)(12800,9.32205)+-(0,1.1363796875)};
  \addlegendentry{\small RLS};
  \addplot plot[error bars/.cd, y dir=both, y explicit] coordinates{(100,6.9096)+-(0,2.0307)(200,8.377749999999999)+-(0,2.0513999999999997)(400,9.24295)+-(0,2.480975)(800,10.7948)+-(0,2.0857625)(1600,11.43244375)+-(0,1.926925)(3200,12.55955)+-(0,1.7748249999999999)(6400,14.31246875)+-(0,2.1232953125)(12800,15.375760156250001)+-(0,2.0111484375)};
  \addlegendentry{\small(1+1) EA};
  \addplot plot[error bars/.cd, y dir=both, y explicit] coordinates{(100,7.9952)+-(0,2.4788)(200,8.95435)+-(0,2.4393000000000002)(400,10.044425)+-(0,2.39575)(800,12.5736375)+-(0,2.700025)(1600,14.42355)+-(0,3.4830437499999998)(3200,16.741609375)+-(0,3.6047375)(6400,20.135346875)+-(0,4.4348671875)(12800,22.997716406250003)+-(0,3.75469140625)};
  \addlegendentry{\small$(1+(\lambda,\lambda)), n*$};
  \addplot plot[error bars/.cd, y dir=both, y explicit] coordinates{(100,7.5630999999999995)+-(0,2.1526)(200,9.42955)+-(0,2.6067)(400,10.615875)+-(0,2.62)(800,12.0263625)+-(0,3.0488875)(1600,13.59253125)+-(0,2.6892125)(3200,14.930578125)+-(0,2.912553125)(6400,16.7242546875)+-(0,2.8265625)(12800,18.484753125)+-(0,2.60756953125)};
  \addlegendentry{\small$(1+(\lambda,\lambda)), \log n*$};
\end{axis}
\end{tikzpicture}}\label{rt-n}}
\par
\subfloat[][MAX-SAT]{
\scalebox{\plotscaleout}{\begin{tikzpicture}
\begin{axis}[width=\plotscalein, enlargelimits=false, xmode=log, log basis x = 2, xlabel=Problem size, ylabel=Evaluations / $x$, legend pos=north west]
  \addplot plot[error bars/.cd, y dir=both, y explicit] coordinates{(100,8.1999)+-(0,2.0203)(200,8.82115)+-(0,2.1656)(400,8.79645)+-(0,2.5663)(800,9.8784125)+-(0,3.2777749999999997)(1600,10.46711875)+-(0,3.04811875)(3200,11.311596875)+-(0,3.478371875)(6400,13.3838546875)+-(0,3.2515828125)(12800,15.833101562500001)+-(0,4.6081171875)};
  \addlegendentry{\small$(1+(\lambda,\lambda)), n$};
  \addplot plot[error bars/.cd, y dir=both, y explicit] coordinates{(100,7.9216999999999995)+-(0,1.7783000000000002)(200,8.27005)+-(0,1.32195)(400,8.251325000000001)+-(0,1.14785)(800,8.3130875)+-(0,1.0241375)(1600,8.45760625)+-(0,0.92121875)(3200,8.719362499999999)+-(0,0.90501875)(6400,9.1062390625)+-(0,0.9041906249999999)(12800,9.121128125)+-(0,0.817896875)};
  \addlegendentry{\small$(1+(\lambda,\lambda)), \log n$};
  \addplot plot[error bars/.cd, y dir=both, y explicit] coordinates{(100,5.8765)+-(0,1.5708000000000002)(200,6.19015)+-(0,1.5131000000000001)(400,6.59885)+-(0,1.292875)(800,6.9539125)+-(0,1.209175)(1600,7.775393749999999)+-(0,1.1961875)(3200,8.270184375)+-(0,1.247415625)(6400,8.9110484375)+-(0,1.1626812499999999)(12800,9.52956953125)+-(0,1.31337578125)};
  \addlegendentry{\small RLS};
  \addplot plot[error bars/.cd, y dir=both, y explicit] coordinates{(100,8.093300000000001)+-(0,2.077)(200,9.1806)+-(0,2.1462)(400,9.8097)+-(0,2.02555)(800,10.788725)+-(0,1.9938125)(1600,11.82930625)+-(0,2.1046625)(3200,13.459503125000001)+-(0,1.8652406250000002)(6400,14.4594)+-(0,2.072440625)(12800,16.11074921875)+-(0,2.74694609375)};
  \addlegendentry{\small(1+1) EA};
  \addplot plot[error bars/.cd, y dir=both, y explicit] coordinates{(100,8.1039)+-(0,1.8659000000000001)(200,8.62015)+-(0,1.7155500000000001)(400,8.947474999999999)+-(0,1.5182)(800,8.7839375)+-(0,1.2811750000000002)(1600,8.98741875)+-(0,1.04271875)(3200,9.185315625)+-(0,1.102446875)(6400,9.8719046875)+-(0,1.3245125)(12800,11.031305468749999)+-(0,2.19859921875)};
  \addlegendentry{\small$(1+(\lambda,\lambda)), n*$};
  \addplot plot[error bars/.cd, y dir=both, y explicit] coordinates{(100,8.2801)+-(0,2.0283)(200,8.59475)+-(0,1.6172499999999999)(400,8.4786)+-(0,1.246675)(800,8.5647)+-(0,1.1354625)(1600,8.720575)+-(0,0.89471875)(3200,9.004346875)+-(0,0.9694625000000001)(6400,9.119115625)+-(0,0.8490421874999999)(12800,9.28373984375)+-(0,0.7835968750000001)};
  \addlegendentry{\small$(1+(\lambda,\lambda)), \log n*$};
\end{axis}
\end{tikzpicture}}\label{rt-sat}}
\caption{Runtimes on different functions}
\end{figure}
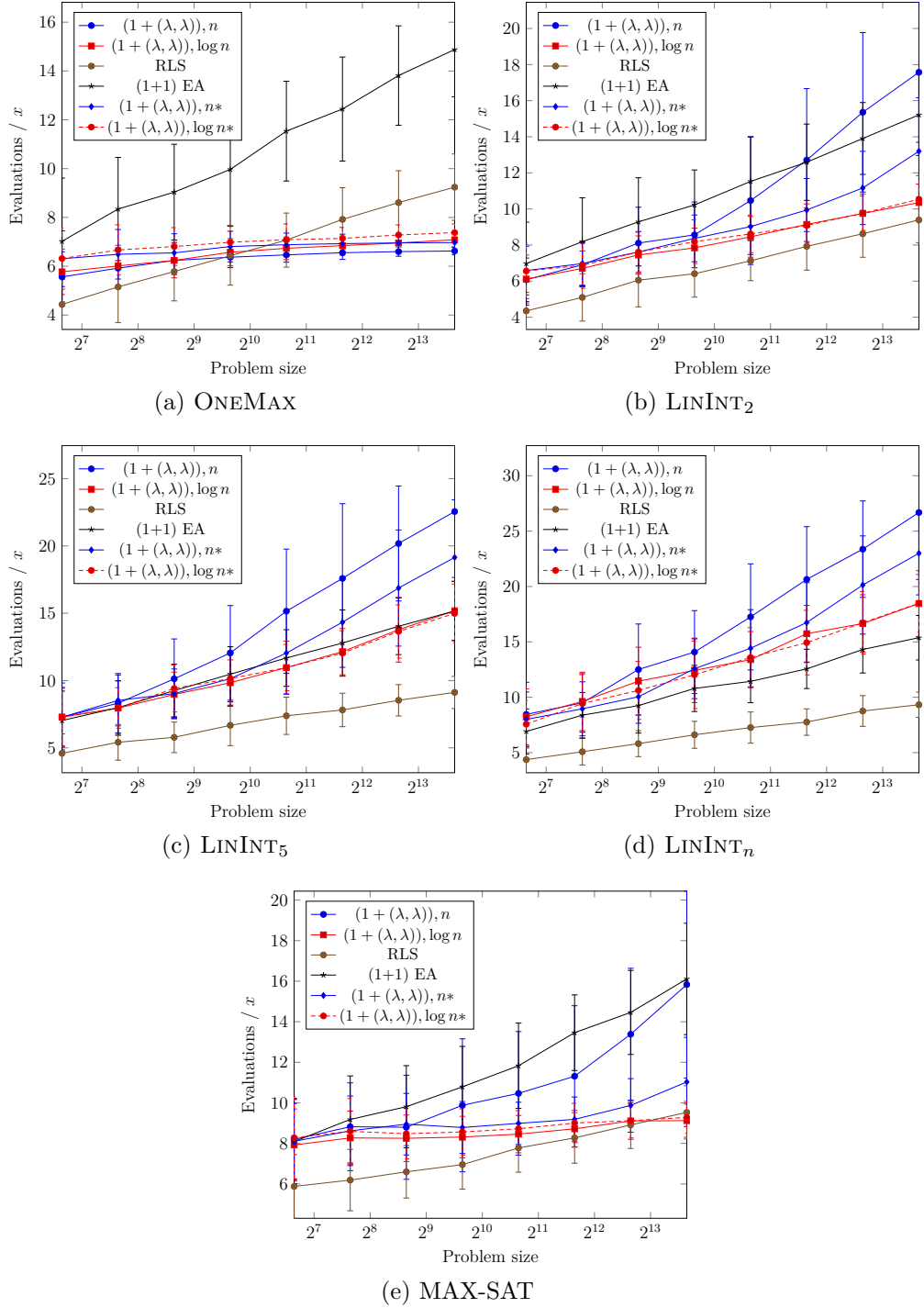

On $\textsc{LinInt}_2$ (Fig.~\ref{rt-2}) the logarithmically constrained versions still behave quite well, although not linearly. They are faster than the $(1+1)$~EA, and slightly
slower than RLS, and the general trend looks as if they will eventually hit and overcome RLS. The unconstrained versions, on the contrary, already behave noticeably worse.
The original unconstrained $(1+(\lambda,\lambda))$~GA slows down as a trend already at $n = 1600$ and gets worse than the $(1+1)$~EA at $n = 6400$. While at the latter problem size
the modified unconstrained $(1+(\lambda,\lambda))$~GA also starts to slow down, it is still below the $(1+1)$~EA, so at least the constant factor is smaller.

The situation is similar in Fig.~\ref{rt-5} with $\textsc{LinInt}_5$ and in Fig.~\ref{rt-n} with $\textsc{LinInt}_n$.
The general trend of the unconstrained algorithms is to rise above the runtime of the $(1+1)$~EA, but the one with the modified self-adjustment strategy is always better.
The slopes of these plots allows conjecturing that the runtime scales as $\Theta(n (\log n)^2)$, however, a proof or a more elaborate experiment is desirable.

The situation with the MAX-SAT problem (Fig.~\ref{rt-sat}) looks a little bit different, as the original unconstrained version seems to climb the plot at much higher rates
than the modified one. Whether this is an artifact of the scale, or the asymptotics of the runtime of these versions are really different, is impossible to completely 
derive from the plots. Unfortunately, MAX-SAT is slower than linear functions to either compute the fitness or to recompute it incrementally when a small number of bits change,
and also is harder to reason about from the theoretic point of view, so getting this part of the picture in order will take more time and resources.

\section{Runtime Analysis on \textsc{OneMax}}\label{sect:theory}

In this section we show that the suggested modification of the self-adjustment strategy for $\lambda$ retains the linear runtime on \textsc{OneMax}.
The proof of this result is based on theorems and claims from~\cite{doerr-doerr-lambda-lambda-self-adjustment}.
In particular, we retain the parameter tuning of $p = \lambda / n$ and $c = 1 / \lambda$.
We also retain the original adaptation speed factor $F$ of 1.5, which meets the proof restrictions in~\cite{doerr-doerr-lambda-lambda-self-adjustment}.
\begin{theorem}\label{main-runtime-theorem}
	The optimization time of the self-adjusting $(1+(\lambda,\lambda))$ GA
	with a capability of resetting $\lambda$ to the last successful value and parameters $p = \lambda / n$ and $c = 1 / \lambda$
	on every \textsc{OneMax} function is O(n) for sufficiently small update strength $F \in (1,2)$.
\end{theorem}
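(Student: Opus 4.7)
The plan is to adapt the drift-based runtime analysis of~\cite{doerr-doerr-lambda-lambda-self-adjustment}, working fitness-level by fitness-level in the Hamming distance $d = n - f(x)$ and showing that the expected number of fitness evaluations spent at each level is $O(\sqrt{n/d})$, so that summation over $d \in [1..n]$ yields the claimed $O(n)$ bound. Throughout I would reuse, as a black box, the core estimate from that paper: when the current $\lambda$ lies within a constant factor of the level-dependent optimum $\lambda^{*}(d) := \sqrt{n/d}$, a single iteration of the $\onell$~GA improves the fitness with probability $\Omega(1)$ and costs $\Theta(\lambda)$ fitness evaluations. The entire task then reduces to showing that the modified adaptation keeps $\lambda$ close enough to $\lambda^{*}(d)$ on enough iterations.

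The key structural invariant I would establish is the following. Whenever a successful iteration occurs at distance $d$, the algorithm sets $\lambda_{0} \gets \max\{\lambda/F, 1\}$, $B \gets 0$, $\Delta \gets 10$. Since $\lambda^{*}(d-1)/\lambda^{*}(d) = \sqrt{d/(d-1)} \le \sqrt{2}$ for $d \ge 2$, and since the arguments in~\cite{doerr-doerr-lambda-lambda-self-adjustment} show that at the moment of success the active $\lambda$ is $\Theta(\lambda^{*}(d))$ with constant probability, the newly saved base $\lambda_{0}$ enters the next fitness level within a constant factor of the new optimum $\lambda^{*}(d-1)$. In a constant number of failed iterations inside the first span of length $\Delta = 10$, the sequence $\lambda = \lambda_{0} F^{B/(U-1)}$ then sweeps through the good window around $\lambda^{*}(d-1)$, and a success occurs with $\Omega(1)$ probability at a cost of $O(\lambda^{*}(d-1))$ evaluations, which is exactly the per-level bound needed.

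If the first span fails to produce an improvement, I would use a geometric tail estimate over successive spans. Because the base $\lambda_{0}$ is preserved between spans, every new span re-traverses a window of $\lambda$-values that contains $\lambda^{*}(d)$, giving an $\Omega(1)$ conditional probability of success in each span. The probability of no improvement within the first $k$ spans therefore decays geometrically, while the per-span evaluation cost grows only as $\Theta(\lambda_{0} F^{(9+k-1)/(U-1)})$, i.e.\ by a fixed geometric factor per span; for $F$ sufficiently small the geometric decay of the failure probability dominates the geometric growth of the cost, and the expected evaluations spent at level $d$ remain $O(\lambda^{*}(d))$.

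The main obstacle I foresee is the rigorous joint control of the enlarged state $(\lambda, \lambda_{0}, B, \Delta)$. Unlike in the original self-adjusting scheme, the per-iteration cost no longer evolves as a simple Markov chain in $\lambda$ alone: the growth of $\Delta$ across spans and the reset of $B$ inside a single fitness level couple the adaptation to the history of past failures in a way that has to be tracked explicitly. I expect this to be settled by introducing a potential function $\Phi(d, \lambda, \lambda_{0}, B, \Delta)$ that augments the distance-based potential of~\cite{doerr-doerr-lambda-lambda-self-adjustment} with a penalty charging both $|\log(\lambda/\lambda^{*}(d))|$ and the current span length, and by verifying that its expected one-step change is at most $-c\lambda/n$ up to a bounded martingale term. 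Combining this with standard additive-drift arguments and summing over $d$ then yields the $O(n)$ bound.
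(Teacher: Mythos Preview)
Your route differs from the paper's. The paper keeps the short/long phase decomposition of~\cite{doerr-doerr-lambda-lambda-self-adjustment} intact and adds a single observation: for any fixed number $t$ of unsuccessful iterations starting from base $\tilde\lambda=\lambda_0$, the span-based cost
\[
\sum_{d=\tilde d}^{k-1}\sum_{i=0}^{d-1} 2\tilde\lambda F^{i/4}\;+\;\sum_{j=0}^{c-1} 2\tilde\lambda F^{j/4}
\;\le\;
\sum_{i=0}^{t-1} 2\tilde\lambda F^{i/4},
\]
because both sides contain $t$ terms and every exponent on the left is below the current span length, hence below $t-1$. This termwise domination lets the paper invoke Claims~2.1 and~2.2 of~\cite{doerr-doerr-lambda-lambda-self-adjustment} with essentially unchanged constants; no span-by-span success accounting and no new potential function are introduced.

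Your geometric-tail-over-spans argument, by contrast, rests on the claim that ``every new span re-traverses a window of $\lambda$-values that contains $\lambda^*(d)$''. That needs $\lambda_0 F^{(\Delta-1)/(U-1)} \ge \lambda^*(d)$, i.e.\ $\Delta \ge 1+(U-1)\log_F(\lambda^*/\lambda_0)$. You only establish $\lambda_0=\Theta(\lambda^*)$ \emph{with constant probability}; on the complementary event $\lambda_0$ can sit far below $\lambda^*$, and then the first $\Theta(\log(\lambda^*/\lambda_0))$ spans are too short to reach the good window, so your per-span $\Omega(1)$ success bound fails there. Conversely, if a success occurs with $\lambda\gg\lambda^*$ (which the tail of Claim~2.2 permits), the stored $\lambda_0$ is large and the very first span already costs $\Theta(\lambda_0)\gg\lambda^*$, which your per-level budget does not absorb without an additional expectation argument over the previous level. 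You flag this coupling as the ``main obstacle'' and propose a four-variable potential $\Phi(d,\lambda,\lambda_0,B,\Delta)$, which may well work but is considerably heavier than what the paper needs: the domination inequality above sidesteps both bad cases at once, since it never requires control of the success probability within individual spans.
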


\begin{proof}
The strategy of the proof does not change compared to~\cite[Theorem 5]{doerr-doerr-lambda-lambda-self-adjustment}.
We show that the population size $\lambda$ still does not usually differ much from the optimal choice $\lambda^* = \lceil\sqrt{n/n-f(x)}\rceil$ 
analyzed in~\cite[Theorem 2]{doerr-doerr-lambda-lambda-self-adjustment}.

Let $x \in \{0,1\}^n$, $\lambda \geq C_0\lceil\sqrt{n/n-f(x)}\rceil$, and $q=q(\lambda)$ be the probability that one iteration of the GA starting with $x$ is successful.
By~\cite[Lemma 6]{doerr-doerr-lambda-lambda-self-adjustment}, $q > 1/5$ for all $C_0 > C$.
Thus, for cases when $\lambda$ is much larger than $\lambda^*$, such a reasonably high probability gives
an ability to assume that on one of the next iterations $\lambda$ value is reduced and tends to $\lambda^*$.   

We divide the optimization process into phases as suggested in~\cite{doerr-doerr-lambda-lambda-self-adjustment}.
The first type of phases is called the \emph{short} phase. By definition, this is a phase such that, during all GA iterations which constitute the phase,
an inequality $\lambda \leq C_0 \lambda^*$ stays true. Let $\tilde{\lambda}$ be the initial $\lambda$ value and let $t$ be the number of iterations in the phase. 
There are $2\lambda$ fitness evaluations on single GA iteration with parameter $\lambda$, thus the total number of fitness evaluations for the modified algorithm at this phase
is as follows. If $t \leq \tilde{d}$:
\begin{equation}\label{short-phase-upper-bound-1}
\sum_{i=0}^{t-1} 2\tilde{\lambda}F^{i/4} = 2\tilde{\lambda}\frac{F^{t/4}-1}{F^{1/4}-1} = O(\lambda^*),
\end{equation}
otherwise:
\begin{equation}\label{short-phase-upper-bound-2}
\sum_{d=\tilde{d}}^{k-1} \sum_{i=0}^{d-1} 2\tilde{\lambda}F^{i/4} + \sum_{j=0}^{c-1} 2\tilde{\lambda}F^{j/4} \leq \sum_{i=0}^{t-1} 2\tilde{\lambda}F^{i/4} = 2\tilde{\lambda}\frac{F^{t/4}-1}{F^{1/4}-1} = O(\lambda^*)
\end{equation}
\begin{align}
\sum_{d=\tilde{d}}^{k-1} \sum_{i=0}^{d-1} 2\tilde{\lambda}F^{i/4} + \sum_{j=0}^{c-1} 2\tilde{\lambda}F^{j/4} &= 
2\tilde{\lambda} \frac{\sum_{d=\tilde{d}}^{k-1}(F^{d/4}-1) + F^{c/4} - 1}{F^{1/4}-1}\notag\\
&\geq O(C_0\lambda^*) =O(\lambda^*), \label{short-phase-lower-bound}
\end{align}
where $(k - \tilde{d})$ denotes the number of resets of $\lambda$ to the last successful value, $c$ is the number of iterations
after the last $\lambda$ reset, $\sum_{i=0}^{k} (\tilde{d} + k) + c = t$ and $\tilde{d} = 10$.

The second type of optimization phases is the \emph{long} phase.
In these phases, $\lambda$ is equal or exceeds $C_0 \lambda^*$ for at least one iteration. In turn, the \emph{long} phase splits into an \emph{opening}
sub-phase, which ends when the last iteration with $\lambda < C_0 \lambda^*$ occurs, and the \emph{main} sub-phase, which starts with population size 
of $C_0 \lambda^* \leq \lambda < C_0 \lambda^* F^{1/4}$.

For an \emph{opening} sub-phase, the number of fitness evaluations can be calculated similar to~(\ref{short-phase-upper-bound-1}),~(\ref{short-phase-upper-bound-2}) and~(\ref{short-phase-lower-bound}).
Making a correction that the number of iterations $t = m$, where $m = \max(k): \lambda F^{k/4} < C_0 \lambda^*$, we show that the sum of fitness evaluations is $O(\lambda^*)$. In a same manner, the cost
of the \emph{main} phase can be evaluated, having the initial $\lambda$ at most $C_0 \lambda^* F^{1/4}$.

Consider case when $t > \tilde{d}$, which is different from original one fifth success rule adjustment:
\begin{align}
C_0 \lambda^* F^{1/4} \left(\sum_{d=\tilde{d}}^{k-1} \sum_{i=0}^{d} F^{i/4} + \sum_{j=0}^{c} F^{j/4}\right) &\leq C_0 \lambda^* F^{1/4} \sum_{i=1}^{t} F^{i/4}\notag\\
 &\leq D' C_0 \lambda^* F^{(t+1)/4}\label{long-phase-expectation}
\end{align}
for $D' \geq  1/ (F^{1/4} - 1)$ where $(k - \tilde{d})$ denotes the number of resets of $\lambda$ to the last successful value, $c$ equals to the iterations number after the last $\lambda$ reset,
$\sum_{i=0}^{k} (\tilde{d} + k) + c = t$ and $\tilde{d} = 10$.

Equation (\ref{long-phase-expectation}) proves the equivalent of~\cite[Claim 2.1]{doerr-doerr-lambda-lambda-self-adjustment}:
$E[T \mid I = t] \leq D \lambda^* F^{t/4}$ for large enough constant $D$, where $T$ is the number of fitness evaluations during the \emph{long} phase, and $I$ is the number of iterations in the 
\emph{main} sub-phase.

The statement~\cite[Claim 2.2]{doerr-doerr-lambda-lambda-self-adjustment}, which provides the value of probability that the \emph{main} sub-phase
requires $t$ iterations $\Pr[I = t] = e^{-ct}, c > 0$, is still valid for the modified algorithm, because, generally, 
the drift of $\lambda$ never stops, and the expected decrease of population size is still available when $\lambda$ exceeds the $C_0 \lambda^*$. 
	
Summarizing, the overall cost of a \emph{long} phase is 
\begin{equation*}
\sum_{t=1}^{\infty} E[T \mid I = t] \Pr[I = t] \leq D \lambda^* \sum_{t=1}^{\infty} F^{t/4}e^{-ct,}
\end{equation*}
which, having $F^{1/4}<e^{c}$, is $O(\lambda^*)$.
\end{proof}

\section{Approximation of Performance Assuming Optimal Parameter Choices}\label{sect:best}

Our experiments conducted for understanding the performance landscape of the $(1+(\lambda,\lambda))$~GA in Section~\ref{sect:eui}
could be used to derive the best values of $\lambda$ as a function of the Hamming distance to the optimum. Using these values of $\lambda$,
we may estimate what the performance of the $(1+(\lambda,\lambda))$~GA will be assuming the optimal choice of $\lambda$ throughout the
entire run. While this has been previously done on \textsc{OneMax} in~\cite{learning-from-black-box-thcs},
there was no similar analysis, either theoretical or experimental, for other use cases considered in this paper.

We ran, for $n=1000$ and all the problems (\textsc{OneMax}, $\textsc{LinInt}_2$, $\textsc{LinInt}_5$, $\textsc{LinInt}_n$, MAX-SAT),
all the algorithms tested in Section~\ref{sect:practice}, as well as the $(1+(\lambda,\lambda))$~GA that chooses $\lambda$
based on the Hamming distance to the optimum according to the experimental results presented in Section~\ref{sect:eui}.
The latter is termed the \emph{optimally extrapolated} $(1+(\lambda,\lambda))$~GA.

The results are presented in Table~\ref{table-optimal}.
One can see that, unlike the self-adjusting versions of the $(1+(\lambda,\lambda))$~GA, including the one proposed in this paper,
are outperformed by the optimally extrapolated one. In particular, the latter also outperforms the $(1+1)$~EA on all the problems,
including the hard $\textsc{LinInt}_n$. This observation, although being not very surprising, demonstrates that there is some room for
improvement even in the unchanged framework of the $(1+(\lambda,\lambda))$~GA.

\begin{table}[!t]
\caption{Comparison between the existing algorithms and the optimally extrapolated $(1+(\lambda,\lambda))$~GA
on various problems, problem size $n=1000$}\label{table-optimal}
\centering
\newcommand{\phh}{\phantom{1}}
		\begin{tabular}{c|l|l}
			Problem 			& Algorithm 					   & Fitness Evaluations \\
			\hline
			& RLS 							   & $\phh6569.05 \pm 1220.90$ \\
			& (1+1)EA 						   & $10878.39 \pm 2436.27$ \\
			& $(1+(\lambda,\lambda)), n$       & $\phh6411.01 \pm \phh414.87$ \\
			\textsc{OneMax} & $(1+(\lambda,\lambda)), \log n$  & $\phh6605.57 \pm \phh590.99$ \\
			& $(1+(\lambda,\lambda)), n*$      & $\phh9257.98 \pm 2201.33$ \\
			& $(1+(\lambda,\lambda)), \log n*$ & $\phh9746.43 \pm 2631.36$ \\
			& $(1+(\lambda,\lambda)), \text{extra}$ 	   & $\phh5651.37 \pm \phh413.71$ \\
			\hline					  
			& RLS 							   & $\phh6669.93 \pm 1126.18$ \\
			& (1+1)EA 						   & $10909.74 \pm 1982.30$ \\
			& $(1+(\lambda,\lambda)), n$       & $\phh9183.77 \pm 2266.10$ \\
			$\textsc{LinInt}_2$ & $(1+(\lambda,\lambda)), \log n$  & $\phh8066.27 \pm 1093.61$ \\
			& $(1+(\lambda,\lambda)), n*$      & $\phh9906.53 \pm 2295.75$ \\
			& $(1+(\lambda,\lambda)), \log n*$ & $\phh9923.59 \pm 1907.57$ \\
			& $(1+(\lambda,\lambda)), \text{extra}$ 	   & $\phh7531.25 \pm 1186.80$ \\ 
			\hline					  
			& RLS 							   & $\phh6864.09 \pm 1328.93$ \\
			& (1+1)EA 						   & $10688.14 \pm 2347.31$ \\
			& $(1+(\lambda,\lambda)), n$       & $12624.72 \pm 3266.02$ \\
			$\textsc{LinInt}_5$ & $(1+(\lambda,\lambda)), \log n$  & $10403.99 \pm 1696.30$ \\
			& $(1+(\lambda,\lambda)), n*$      & $11144.12 \pm 2145.53$ \\
			& $(1+(\lambda,\lambda)), \log n*$ & $11027.08 \pm 1996.54$ \\
			& $(1+(\lambda,\lambda)), \text{extra}$ 	   & $\phh9502.26 \pm 1632.21$ \\
			\hline					
			& RLS 							    & $\phh6773.77 \pm 1379.30$ \\
			& (1+1)EA 						    & $11216.71 \pm 2414.28$ \\
			& $(1+(\lambda,\lambda)), n$       & $15420.16 \pm 4281.11$ \\
			$\textsc{LinInt}_{1000}$ & $(1+(\lambda,\lambda)), \log n$  & $12756.18 \pm 2703.10$ \\
			& $(1+(\lambda,\lambda)), n*$      & $12358.61 \pm 2631.16$ \\
			& $(1+(\lambda,\lambda)), \log n*$ & $12280.56 \pm 2367.41$ \\
			& $(1+(\lambda,\lambda)), \text{extra}$ 	& $10994.36 \pm 2330.01$ \\ 
			\hline 
			& RLS 							    & $\phh7458.58 \pm 1392.33$ \\
			& (1+1)EA 						    & $11087.48 \pm 2094.26$ \\
			& $(1+(\lambda,\lambda)), n$       & $\phh9821.03 \pm 3086.19$ \\
			MAX-SAT 				 & $(1+(\lambda,\lambda)), \log n$  & $\phh8442.38 \pm 1044.57$ \\
			& $(1+(\lambda,\lambda)), n*$      & $10337.16 \pm 2018.62$ \\
			& $(1+(\lambda,\lambda)), \log n*$ & $10558.63 \pm 2140.69$ \\
			& $(1+(\lambda,\lambda)), \text{extra}$ 	& $\phh7805.58 \pm \phh946.41$ 
		\end{tabular}
\end{table}

\section{Conclusion}\label{sect:conclusion}

We proposed a modification of the one fifth rule, that is used for self-adjustment of the parameter $\lambda$ in the $(1+(\lambda,\lambda))$ genetic algorithm.
It is aimed at reducing the unwanted effects, resulting in the decreased performance on problems with either imperfect fitness-distance correlation
(linear functions with random integer weights limited by a constant), low to none fitness-distance correlation (linear functions with linear random weights),
or more tricky cases (random MAX-SAT problems with planted solutions and logarithmic clause densities).
Its main idea is to slow down the growth of $\lambda$ on long series of unsuccessful iterations, which, in terms of iterations, happens at roughly the square root of its original speed.

While definitely not being a silver bullet, and without a strict evidence of asymptotic speedups in cases pathological to the original self-adjustment scheme,
we still were able to see stable improvements over the classic $(1+(\lambda,\lambda))$~GA on all problematic functions. While on \textsc{OneMax} the proposed
strategy works by maybe 10\% worse, the runtime is still linear not only in practice, but also in theory, which we proved mostly along the lines
of the proof for the original self-adjustment scheme.

We also investigated how the $(1+(\lambda,\lambda))$~GA would have performed when, roughly speaking, the strategy of self-adjustment for $\lambda$ is optimal.
For that we derived the optimal values of $\lambda$, as a function of the Hamming distance to the optimum, from the previous experimental results,
and used them in the subsequent runs of the $(1+(\lambda,\lambda))$~GA. Although this does not form a realistic evolutionary algorithm, these results show
that better self-adjustment strategies can still, in theory, bring the runtimes below the level of the $(1+1)$~EA.

\textbf{Acknowledgment:} This research was supported by the Russian Scientific Foundation, agreement No.~17-71-20178.

\bibliographystyle{splncs04}
\bibliography{../../../../bibliography}

\end{document}